\newtheorem{theorem}{Theorem}
\newtheorem{lemma}[theorem]{Lemma}
\newcommand{\Z}{\mathbb{Z}}
\newcommand{\R}{\mathbb{R}}
\newcommand{\SE}[1]{\ensuremath{\operatorname{SE}(#1)}}
\newcommand{\SO}[1]{\ensuremath{\operatorname{SO}(#1)}}
\newcommand{\ind}{\ensuremath{\operatorname{Ind}}}
\DeclareMathOperator{\Hom}{Hom}
\renewcommand{\vec}{\operatorname{vec}}
\def\blfootnote{\gdef\@thefnmark{}\@footnotetext}
\title{3D Steerable CNNs: Learning Rotationally Equivariant Features in Volumetric Data}
\author{%
  Maurice Weiler*\\
  University of Amsterdam \\
  \texttt{m.weiler@uva.nl} \\
  \And 
  Mario Geiger* \\
  EPFL \\
  \texttt{mario.geiger@epfl.ch} \\
  \AND
  Max Welling \\
  University of Amsterdam, CIFAR,\\
  Qualcomm AI Research \\
  \texttt{m.welling@uva.nl} \\
  \And
  Wouter Boomsma \\
  University of Copenhagen \\
  \texttt{wb@di.ku.dk} \\
  \And
  Taco Cohen \\
  Qualcomm AI Research \\
  \texttt{taco.cohen@gmail.com} \\
}
\begin{document}

\maketitle

\blfootnote{
    * Equal Contribution. MG initiated the project, derived the kernel space constraint, wrote the first network implementation and ran the Shrec17 experiment. MW solved the kernel constraint analytically, designed the anti-aliased kernel sampling in discrete space and coded / ran many of the CATH experiments.
}
\blfootnote{
    Source code is available at \url{https://github.com/mariogeiger/se3cnn}.
}

\begin{abstract}
    We present a convolutional network that is equivariant to rigid body motions.
    The model uses scalar-, vector-, and tensor fields over 3D Euclidean space to represent data, and equivariant convolutions to map between such representations.
    These $\SE3$-equivariant convolutions utilize kernels which are parameterized as a linear combination of a complete steerable kernel basis, which is derived analytically in this paper.
    We prove that equivariant convolutions are the most general equivariant linear maps between fields over $\R^3$.
    Our experimental results confirm the effectiveness of 3D Steerable CNNs for the problem of amino acid propensity prediction and protein structure classification, both of which have inherent $\SE3$ symmetry.
\end{abstract}

\section{Introduction}
\vspace*{-1ex}

Increasingly, machine learning techniques are being applied in the natural sciences.
Many problems in this domain, such as the analysis of protein structure, exhibit exact or approximate symmetries.
It has long been understood that the equations that define a model or natural law should respect the symmetries of the system under study, and that knowledge of symmetries provides a powerful constraint on the space of admissible models.
Indeed, in theoretical physics, this idea is enshrined as a fundamental principle, known as Einstein's principle of general covariance.
Machine learning, which is, like physics, concerned with the induction of predictive models, is no different:
our models must respect known symmetries in order to produce physically meaningful results.

A lot of recent work, reviewed in Sec. \ref{sec:related_work}, has focused on the problem of developing equivariant networks, which respect some known symmetry. 
In this paper, we develop the theory of $\SE3$-equivariant networks.
This is far from trivial, because $\SE3$ is both non-commutative and non-compact.
Nevertheless, at run-time, all that is required to make a 3D convolution equivariant using our method, is to parameterize the convolution kernel as a linear combination of pre-computed steerable basis kernels.
Hence, the 3D Steerable CNN incorporates equivariance to symmetry transformations without deviating far from current engineering best practices.

The architectures presented here fall within the framework of Steerable G-CNNs \cite{Cohen2017-STEER, Worrall2017-HNET, Cohen2018-IIR, Thomas2018-TFN}, which represent their input as fields over a homogeneous space ($\R^3$ in this case), and use steerable filters \cite{Freeman1991-STEER, Simoncelli1995-STEERPYR} to map between such representations.
In this paper, the convolution kernel is modeled as a tensor field satisfying an equivariance constraint, from which steerable filters arise automatically.

We evaluate the 3D Steerable CNN on two challenging problems: prediction of amino acid preferences from atomic environments, and classification of protein structure.
We show that a 3D Steerable CNN improves upon state of the art performance on the former task.
For the latter task, we introduce a new and challenging dataset, and show that the 3D Steerable CNN consistently outperforms a strong CNN baseline over a wide range of trainingset sizes.

\section{Related Work}
\label{sec:related_work}
\vspace*{-1ex}

There is a rapidly growing body of work on neural networks that are equivariant to some group of symmetries \cite{Sifre2013-GSCAT, Olah2014-eg, Cohen2016-GCNN, Dieleman2016-CYC, Cohen2017-STEER, Marcos2017-VFN, Ravanbakhsh2017-PS, Zaheer2017-zq, Hoogeboom2018-HEX, Kondor2018-CC, Weiler2018-STEERABLE, bekkers2018roto, CCNMol}.
At a high level, these models can be categorized along two axes: the group of symmetries they are equivariant to, and the type of geometrical features they use \cite{Cohen2018-IIR}.
The class of regular G-CNNs represents the input signal in terms of \textit{scalar fields} on a group $G$ (e.g. $\SE3$) or homogeneous space $G/H$ (e.g. $\R^3 = \SE3/\SO3)$ and maps between feature spaces of consecutive layers via group convolutions \cite{Kondor2018-GENERAL, Cohen2016-GCNN}.
Regular G-CNNs can be seen as a special case of steerable (or induced) G-CNNs which represent features in terms of \textit{more general fields} over a homogeneous space \cite{Cohen2017-STEER, Cohen2018-IIR, Kondor2018-CGN, Marcos2017-VFN, Thomas2018-TFN}.
The models described in this paper are of the steerable kind, since they use general fields over $\R^3$.
These fields typically consist of multiple independently transforming geometrical quantities (vectors, tensors, etc.), and can thus be seen as a formalization of the idea of convolutional capsules \cite{Sabour2017-DYNCAPS, Hinton2018-EMCAPS}.

Regular 3D G-CNNs operating on voxelized data via group convolutions were proposed in \cite{Winkels2018-3D,Worrall2018-CUBENET}.
These architectures were shown to achieve superior data efficiency over conventional 3D CNNs in tasks like medical imaging and 3D model recognition.
In contrast to 3D Steerable CNNs, both networks are equivariant to certain discrete rotations only.

The most closely related works achieving full $\SE3$ equivariance are the Tensor Field Network (TFN) \cite{Thomas2018-TFN} and the N-Body networks (NBNs) \cite{Kondor2018-NBN}.
The main difference between 3D Steerable CNNs and both TFN and NBN is that the latter work on irregular point clouds, whereas our model operates on regular 3D grids.
Point clouds are more general, but regular grids can be processed more efficiently on current hardware.
The second difference is that whereas the TFN and NBN use Clebsch-Gordan coefficients to parameterize the network, we simply parameterize the convolution kernel as a linear combination of steerable basis filters.
Clebsch-Gordan coefficient tensors have 6 indices, and depend on various phase and normalization conventions, making them tricky to work with.
Our implementation requires only a very minimal change from the conventional 3D CNN.
Specifically, we compute conventional 3D convolutions with filters that are a linear combination of pre-computed basis filters.
Further, in contrast to TFN, we derive this filter basis directly from an equivariance constraint and can therefore prove its completeness.

The two dimensional analog of our work is the $\SE2$ equivariant harmonic network \cite{Worrall2017-HNET}.
The harmonic network and 3D steerable CNN use features that transform under irreducible representations of $\SO2$ resp. $\SO3$, and use filters related to the circular resp. spherical harmonics.

$\SE3$ equivariant models were already investigated in classical computer vision and signal processing.
In \cite{reisert2008efficient, skibbe2013spherical}, a spherical tensor algebra was utilized to expand signals in terms of spherical tensor fields.
In contrast to 3D Steerable CNNs, this expansion is fixed and not learned.
Similar approaches were used for detection and crossing preserving enhancement of fibrous structures in volumetric biomedical images \cite{janssen2018design, janssen2017hessian, duits2011left}.

\section{Convolutional feature spaces as fields}
\vspace*{-1ex}

A convolutional network produces a stack of $K_n$ feature maps $f_k$ in each layer $n$.
In 3D, we can model the feature maps as (well-behaved) functions $f_k : \R^3 \rightarrow \R$.
Written another way, we have a map $f : \R^3 \rightarrow \R^{K_n}$ that assigns to each position $x$ a feature vector $f(x)$ that lives in what we call the \emph{fiber} $\R^{K_n}$ at $x$.
In practice $f$ will have compact support, meaning that $f(x) = 0$ outside of some compact domain $\Omega \in \R^3$.
We thus define the feature space $\mathcal{F}_n$ as the vector space of continuous maps from $\R^3$ to $\R^{K_n}$ with compact support.

In this paper, we impose additional structure on the fibers. 
Specifically, we assume the fiber consists of a number of geometrical quantities, such as scalars, vectors, and tensors, stacked into a single $K_n$-dimensional vector.
The assignment of such a geometrical quantity to each point in space is called a \emph{field}.
Thus, the feature spaces consist of a number of fields, each of which consists of a number of channels (dimensions).

Before deriving $\SE3$-equivariant networks in Sec. \ref{sec:equivariant_nets} we discuss the transformation properties of fields and the kinds of fields we use in 3D Steerable CNNs.

\subsection{Fields, Transformations and Disentangling}
\vspace*{-1ex}

What makes a geometrical quantity (e.g. a vector) anything more than an arbitrary grouping of feature channels?
The answer is that under rigid body motions, information flows within the channels of a single geometrical quantity, but not between different quantities.
This idea is known as Weyl's principle, and has been proposed as a way of formalizing the notion of disentangling \cite{Kanatani1990-GTMIU, Cohen2014-LIRCLG}.

\begin{wrapfigure}{l}{0.5\textwidth}
    \begin{center}
        \includegraphics{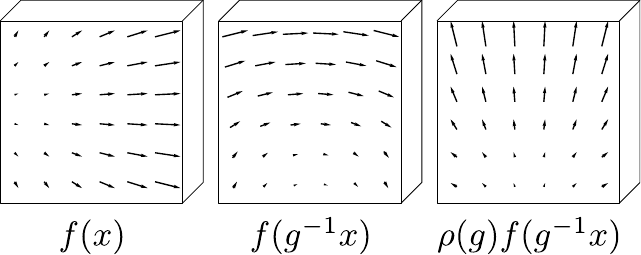}
    \end{center}
    \caption{\small To transform a vector field (L) by a $90\degree$ rotation $g$, first move each arrow to its new position (C), keeping its orientation the same, then rotate the vector itself (R). This is described by the induced representation $\pi = \ind_{\SO3}^{\SE2} \rho$, where $\rho(g)$ is a $3 \times 3$ rotation matrix that mixes the three coordinate channels.} \label{fig:rotate_field}
\end{wrapfigure}

As an example, consider the three-dimensional vector field over $\R^3$, shown in Figure~\ref{fig:rotate_field}.
At each point $x \in \R^3$ there is a vector $f(x)$ of dimension $K=3$.
If the field is translated by $t$, each vector $x-t$ would simply move to a new (translated) position $x$.
When the field is rotated, however, two things happen: the vector at $r^{-1} x$ is moved to a new (rotated) position $x$, \emph{and} each vector is itself rotated by a $3 \times 3$ rotation matrix $\rho(r)$.
Thus, the rotation operator $\pi(r)$ for vector fields is defined as $[\pi(r)f](x):=\rho(r) f(r^{-1} x)$.
\textbf{}
Notice that in order to rotate this field, we need all three channels: we cannot rotate each channel independently, because $\rho$ introduces a functional dependency between them.
For contrast, consider the common situation where in the input space we have an RGB image with $K=3$ channels.
Then $f(x) \in \R^3$, and the rotation can be described using the same formula $\rho(r) f(r^{-1} x)$ if we choose $\rho(r) = I_3$ to be the $3 \times 3$ identity matrix for all $r$.
Since $\rho(r)$ is diagonal for all $r$, the channels do not get mixed, and so in geometrical terms, we would describe this feature space as consisting of three scalar fields, \emph{not} a 3D vector field.
The RGB channels each have an independent physical meaning, while the x and y coordinate channels of a vector do not.

The RGB and 3D-vector cases constitute two examples of fields, each one determined by a different choice of $\rho$.
As one might guess, there is a one-to-one correspondence between the type of field and the type of transformation law (group representation) $\rho$.
Hence, we can speak of a $\rho$-field.

So far, we have concentrated on the behaviour of a field under rotations and translations separately.
A 3D rigid body motion $g \in \SE3$ can always be decomposed into a rotation $r \in \SO3$ and a translation $t \in \R^3$, written as $g = tr$.
So the transformation law for a $\rho$-field is given by the formula
\begin{equation}\label{eq:induced_rep}
    [\pi(tr) f](x) := \rho(r) f(r^{-1}(x - t)).
\end{equation}
The map $\pi$ is known as the representation of $\SE3$ induced by the representation $\rho$ of $\SO3$, which is denoted by $\pi = \ind_{\SO3}^{\SE3} \rho$. For more information on induced representations, see \cite{Cohen2018-IIR, ind_mackey2009, Gurarie1992-SYMMLAP}.

\subsection{Irreducible $\SO3$ features}
\vspace*{-1ex}
We have seen that there is a correspondence between the type of field and the type of inducing representation $\rho$, which describes the rotation behaviour of a single fiber.
To get a better understanding of the space of possible fields, we will now define precisely what it means to be a representation of $\SO3$, and explain how any such representation can be constructed from elementary building blocks called irreducible representations.

A group representation $\rho$ assigns to each element in the group an invertible $n \times n$ matrix.
Here $n$ is the dimension of the representation, which can be any positive integer (or even infinite).
For $\rho$ to be called a representation of $G$, it has to satisfy $\rho(gg') = \rho(g)\rho(g')$, where $gg'$ denotes the composition of two transformations $g, g' \in G$, and $\rho(g) \rho(g')$ denotes matrix multiplication.

To make this more concrete, and to introduce the concept of an irreducible representation, we consider the classical example of a rank-2 tensor (i.e. matrix).
A $3 \times 3$ matrix $A$ transforms under rotations as $A \mapsto R(r) A R(r)^T$, where $R(r)$ is the $3 \times 3$ rotation matrix representation of the abstract group element $r \in \SO3$.
This can be written in matrix-vector form using the Kronecker / tensor product: $\vec(A) \mapsto [R(r) \otimes R(r)] \vec(A) \equiv \rho(r) \vec(A)$.
This is a $9$-dimensional representation of $\SO3$.

One can easily verify that the symmetric and anti-symmetric parts of $A$ remain symmetric respectively anti-symmetric under rotations.
This splits $\R^{3\times 3}$ into $6$- and $3$-dimensional linear subspaces that transform independently.
According to Weyl's principle, these may be considered as distinct quantities, even if it is not immediately visible by looking at the coordinates $A_{ij}$.
The $6$-dimensional space can be further broken down, because scalar matrices $A_{ij} = \alpha \delta_{ij}$ (which are invariant under rotation) and traceless symmetric matrices  also transform independently.
Thus a rank-2 tensor decomposes into representations of dimension $1$ (trace), $3$ (anti-symmetric part), and $5$ (traceless symmetric part).
In representation-theoretic terms, we have reduced the $9$-dimensional representation $\rho$ into irreducible representations of dimension $1, 3$ and $5$.
We can write this as
\begin{equation}
    \rho(r) = Q^{-1} \left[ \bigoplus_{l=0}^2 D^l(r) \right] Q,
\end{equation}
where we use $\bigoplus$ to denote the construction of a block-diagonal matrix with blocks $D^l(r)$, and $Q$ is a change of basis matrix that extracts the trace, symmetric-traceless and anti-symmetric parts of $A$.

More generally, it can be shown that any representation of $\SO3$ can be decomposed into irreducible representations of dimension $2l+1$, for $l=0, 1, 2, \ldots, \infty$.
The irreducible representation acting on this $2l+1$ dimensional space is known as the Wigner-D matrix of order $l$, denoted $D^l(r)$. Note that the Wigner-D matrix of order 4 is a representation of dimension 9, it has the same dimension as the representation $\rho$ acting on $A$ but these are two different representations.

Since any $\SO3$ representation can be decomposed into irreducibles, we only use irreducible features in our networks.
This means that the feature vector $f(x)$ in layer $n$ is a stack of $F_n$ features $f^i(x) \in \R^{2 l_i + 1}$, so that $K_n = \sum_{i=1}^{F_n} 2 l_{in} + 1$.

\section{$\SE3$-Equivariant Networks}\label{sec:equivariant_nets}
\vspace*{-1ex}
Our general approach to building $\SE3$-equivariant networks will be as follows:
First, we will specify for each layer $n$ a linear transformation law $\pi_n(g) : \mathcal{F}_n \rightarrow \mathcal{F}_n$, which describes how the feature space $\mathcal{F}_n$ transforms under transformations of the input by $g \in \SE3$.
Then, we will study the vector space $\Hom_{\SE3}(\mathcal{F}_n , \mathcal{F}_{n+1})$ of equivariant linear maps (intertwiners) $\Phi$ between adjacent feature spaces:
\begin{equation}
    \Hom_{\SE3}(\mathcal{F}_n , \mathcal{F}_{n+1}) = \{ \Phi \in \Hom(\mathcal{F}_n, \mathcal{F}_{n+1}) \, | \, \Phi \pi_n(g) = \pi_{n+1}(g) \Phi, \; \; \forall g \in \SE3 \}
\end{equation}
Here $\Hom(\mathcal{F}_n, \mathcal{F}_{n+1})$ is the space of linear (not necessarily equivariant) maps from $\mathcal{F}_n$ to $\mathcal{F}_{n+1}$.

By finding a basis for the space of intertwiners and parameterizing $\Phi_n$ as a linear combination of basis maps, 
we can make sure that layer $n + 1$ transforms according to $\pi_{n + 1}$ if layer $n$ transforms according to $\pi_n$, thus guaranteeing equivariance of the whole network by induction.

As explained in the previous section, fields transform according to induced representations \cite{Cohen2017-STEER, Cohen2018-IIR, ind_mackey2009, Gurarie1992-SYMMLAP}.
In this section we show that  equivariant maps between induced representations of $\SE3$ can always be expressed as convolutions with equivariant / steerable filter banks.
The space of equivariant filter banks turns out to be a linear subspace of the space of filter banks of a conventional 3D CNN.
The filter banks of our network are expanded in terms of a basis of this subspace with parameters corresponding to expansion coefficients.

Sec. \ref{sec:equivariance_constraint} derives the linear constraint on the kernel space for arbitrary induced representations.
From Sec. \ref{sec:analytical_solution} on we specialize to representations induced from irreducible representations of $\SO3$ and derive a basis of the equivariant kernel space for this choice analytically.
Subsequent sections discuss choices of equivariant nonlinearities and the actual discretized implementation.

\subsection{The Subspace of Equivariant Kernels}\label{sec:equivariance_constraint}
\vspace*{-1ex}
A continuous linear map between $\mathcal{F}_n$ and $\mathcal{F}_{n+1}$ can be written using a continuous kernel $\kappa$ with signature $\kappa : \R^3 \times \R^3 \rightarrow \R^{K_{n+1} \times K_n}$, as follows:
\begin{equation}
    [\kappa \cdot f](x) = \int_{\R^3} \kappa(x, y) f(y) dy
    \label{eq:general_linear_F}
\end{equation}

\begin{lemma}
    \label{lemma:kernel_constraint}
    The map $f \mapsto \kappa \cdot f$ is equivariant if and only if for all $g \in \SE3$,
    \begin{equation}
        \label{eq:kernel_constraint_2arg}
        \kappa(g x, g y) = \rho_2(r) \kappa(x, y) \rho_1(r)^{-1},
    \end{equation}
\end{lemma}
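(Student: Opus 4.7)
The plan is a direct unfolding of the equivariance condition $\kappa \cdot \pi_n(g) = \pi_{n+1}(g) \circ (\kappa \cdot)$, using the explicit formula \eqref{eq:induced_rep} for the induced representation on both sides, and then reading off the pointwise constraint on $\kappa$. Concretely, I write $g = tr \in \SE3$ and abbreviate the integration map as $T_\kappa f = \kappa \cdot f$. I will prove both implications simultaneously by keeping all equalities reversible.

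First I evaluate the left-hand side. By \eqref{eq:general_linear_F} and \eqref{eq:induced_rep},
\begin{equation*}
[T_\kappa \pi_n(g) f](x) = \int_{\R^3} \kappa(x, y)\, \rho_1(r)\, f\bigl(r^{-1}(y - t)\bigr)\, dy.
\end{equation*}
Now I substitute $y' = r^{-1}(y - t)$, i.e.\ $y = gy'$; since $g$ acts on $\R^3$ by a rotation followed by a translation, the Lebesgue measure is preserved and $dy = dy'$. This gives
\begin{equation*}
[T_\kappa \pi_n(g) f](x) = \int_{\R^3} \kappa(x, gy')\, \rho_1(r)\, f(y')\, dy'.
\end{equation*}
For the right-hand side I simply apply \eqref{eq:induced_rep} to $T_\kappa f$:
\begin{equation*}
[\pi_{n+1}(g) T_\kappa f](x) = \rho_2(r) \int_{\R^3} \kappa(g^{-1}x, y)\, f(y)\, dy.
\end{equation*}

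Equivariance means these two integrals agree for every $f \in \mathcal{F}_n$ and every $x$. Since $\mathcal{F}_n$ is the space of continuous compactly supported $\R^{K_n}$-valued maps, it is dense in the sense needed to conclude that the matrix-valued integrands coincide almost everywhere, and by continuity of $\kappa$ everywhere: $\kappa(x, gy)\rho_1(r) = \rho_2(r)\kappa(g^{-1}x, y)$. Rearranging and then substituting $x \mapsto gx$ yields exactly \eqref{eq:kernel_constraint_2arg}. Conversely, if \eqref{eq:kernel_constraint_2arg} holds, the same manipulations run in reverse show that $T_\kappa$ intertwines $\pi_n$ and $\pi_{n+1}$, so the equivalence is established.

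The only subtle step is the passage from ``integrals agree for every test field $f$'' to ``the $\R^{K_{n+1}\times K_n}$-valued integrands agree pointwise''; this is a standard density/continuity argument, applied componentwise by choosing $f$ to concentrate around any prescribed $y$ with any prescribed value in $\R^{K_n}$. Everything else is bookkeeping with the induced representation formula and a single translation-rotation change of variables.
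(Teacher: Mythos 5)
Your proof follows essentially the same route as the paper's: expand both sides of the intertwining condition using the induced representation formula, change variables $y \mapsto gy$ (using that rigid motions preserve Lebesgue measure), equate, and invoke arbitrariness of $f$ before substituting $x \mapsto gx$. You are in fact slightly more careful than the paper at the one genuinely delicate step --- justifying that agreement of the integrals for all test fields forces pointwise agreement of the continuous matrix-valued integrands --- and in noting explicitly that all steps are reversible, which the paper leaves implicit.
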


\begin{proof}
For this map to be equivariant, it must satisfy $\kappa \cdot [\pi_1(g) f] = \pi_2(g) [\kappa \cdot f]$.
Expanding the left hand side of this constraint, using $g=tr$, and the substitution $y \mapsto gy$, we find:
\begin{equation}
    \begin{aligned}
        \kappa \cdot [\pi_1(g) f](x)
        &=
        \int_{\R^3} \kappa(x, g y) \rho_1(r) f(y) dy \\
    \end{aligned}
\end{equation}

For the right hand side,
\begin{equation}
    \begin{aligned}
        \pi_2(g) [\kappa \cdot f](x)
        &=
        \rho_2(r) \int_{\R^3} \kappa(g^{-1} x, y) f(y) dy.
    \end{aligned}
\end{equation}

Equating these, and using that the equality has to hold for arbitrary $f\in\mathcal{F}_n$, we conclude:
\begin{equation}
    \rho_2(r) \kappa(g^{-1} x, y) = \kappa(x, gy) \rho_1(r).
\end{equation}
Substitution of $x \mapsto gx$ and right-multiplication by $\rho_1(r)^{-1}$ yields the result.
\end{proof}

\begin{theorem}
    A linear map from $\mathcal{F}_n$ to $\mathcal{F}_{n+1}$ is equivariant if and only if it is a cross-correlation with a rotation-steerable kernel.
\end{theorem}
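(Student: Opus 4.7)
My plan is to extract, from the two-argument constraint of Lemma~\ref{lemma:kernel_constraint}, both translation invariance (which collapses $\kappa$ to a one-argument kernel and so exhibits the integral operator as a cross-correlation) and a rotation-steerability constraint on that one-argument kernel, by evaluating the condition separately on the two factors of the split $g = tr$.

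First I would specialize \eqref{eq:kernel_constraint_2arg} to pure translations ($r = \id$). Since $\rho_1(\id)$ and $\rho_2(\id)$ are identity matrices, this yields $\kappa(x+t, y+t) = \kappa(x,y)$ for all $t \in \R^3$. Choosing $t = -y$ gives $\kappa(x,y) = \kappa(x-y, 0)$, so I can define a single-argument kernel $\tilde\kappa(z) := \kappa(z,0)$. Substituting into \eqref{eq:general_linear_F}, the map becomes $[\kappa \cdot f](x) = \int_{\R^3} \tilde\kappa(x-y)\, f(y)\, dy$, which (up to the paper's sign convention) is a cross-correlation $\tilde\kappa \corr f$.

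Next I would specialize \eqref{eq:kernel_constraint_2arg} to pure rotations ($t = 0$) and then set $y = 0$. Using $\kappa(x,y) = \tilde\kappa(x-y)$, the constraint becomes
\[
\tilde\kappa(rx) = \rho_2(r)\, \tilde\kappa(x)\, \rho_1(r)^{-1}, \qquad \forall r \in \SO3,\; x \in \R^3,
\]
which is exactly the defining property of a rotation-steerable kernel. For the converse, I would take any steerable $\tilde\kappa$, set $\kappa(x,y) := \tilde\kappa(x-y)$, and observe that $gx - gy = r(x-y)$ implies $\kappa(gx, gy) = \tilde\kappa(r(x-y)) = \rho_2(r)\, \tilde\kappa(x-y)\, \rho_1(r)^{-1} = \rho_2(r)\, \kappa(x,y)\, \rho_1(r)^{-1}$, so Lemma~\ref{lemma:kernel_constraint} delivers equivariance.

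The main difficulty is purely bookkeeping: splitting $g = tr$ correctly, tracking left versus right actions of $\rho_1$ and $\rho_2$, and being consistent about whether the defining integral involves $\tilde\kappa(x-y)$ or $\tilde\kappa(y-x)$ (convolution versus correlation of the same underlying filter). The conceptual content—turning the equivariance of a linear operator into a pointwise algebraic constraint on the kernel—was already accomplished in Lemma~\ref{lemma:kernel_constraint}, so no further analytic machinery should be needed beyond the two substitutions above.
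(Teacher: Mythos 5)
Your proof is correct and follows essentially the same route as the paper: specialize the two-argument constraint of Lemma~\ref{lemma:kernel_constraint} to pure translations to collapse $\kappa$ to a one-argument kernel (the paper translates by $-x$ to obtain $\kappa(y-x)$, you translate by $-y$ to obtain $\tilde\kappa(x-y)$, a reflection of the kernel that only affects the convolution-versus-correlation naming and is immaterial since the steerability constraint is preserved under $x \mapsto -x$), then specialize to pure rotations to obtain $\tilde\kappa(rx) = \rho_2(r)\,\tilde\kappa(x)\,\rho_1(r)^{-1}$. Your explicit verification of the converse direction is a small addition that the paper leaves implicit in the ``if and only if'' of the lemma, but the substance of the argument is identical.
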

\begin{proof}
    Lemma \ref{lemma:kernel_constraint} implies that we can write $\kappa$ in terms of a one-argument kernel, since for $g=-x:$
    \begin{equation}
        \kappa(x, y) = \kappa(0, y - x) \equiv \kappa(y - x).
    \end{equation}
    Substituting this into Equation \ref{eq:general_linear_F}, we find
    \begin{equation}
        \label{eq:kernel_correlation}
        [\kappa \cdot f](x) = \int_{\R^3} \kappa(x, y) f(y) dy = \int_{\R^3} \kappa(y - x) f(y) dy = 
        [\kappa \star f](x).
    \end{equation}
    
    Cross-correlation is always translation-equivariant, but Eq. \ref{eq:kernel_constraint_2arg}
    still constrains $\kappa$ rotationally:
    \begin{equation}
        \label{eq:kernel_constraint_1arg}
        \kappa(r x) = \rho_2(r) \kappa(x) \rho_1(r)^{-1}.
    \end{equation}
    A kernel satisfying this constraint is called \textit{rotation-steerable}.
\end{proof}

We note that $\kappa \star f$ (Eq. \ref{eq:kernel_correlation}) is exactly the operation used in a conventional convolutional network, just written in an unconventional form, using a matrix-valued kernel (``propagator'') $\kappa : \R^3 \rightarrow \R^{K_{n+1} \times K_n}$.

Since Eq. \ref{eq:kernel_constraint_1arg} is a linear constraint on the correlation kernel $\kappa$, the space of equivariant kernels (i.e. those satisfying Eq. \ref{eq:kernel_constraint_1arg}) forms a vector space.
We will now proceed to compute a basis for this space, so that we can parameterize the kernel as a linear combination of basis kernels.

\subsection{Solving for the Equivariant Kernel Basis}\label{sec:analytical_solution}
\vspace*{-1ex}

As mentioned before, we assume that the $K_n$-dimensional feature vectors $f(x)=\oplus_i f^i(x)$ consist of irreducible features $f^i(x)$ of dimension $2 \, l_{in} + 1$.
In other words, the representation $\rho_n(r)$ that acts on fibers in layer $n$ is block-diagonal, with irreducible representation $D^{l_{in}}(r)$ as the $i$-th block.
This implies that the kernel $\kappa : \R^3 \rightarrow \R^{K_{n+1} \times K_n}$ splits into blocks\footnote{For more details on the block structure see Sec. 2.7 of \cite{Cohen2017-STEER}} $\kappa^{jl} : \R^3 \rightarrow \R^{(2j+1) \times (2l+1)}$ mapping between irreducible features.
The~blocks~themselves~are~by~Eq.~\ref{eq:kernel_constraint_1arg}~constrained~to~transform~as
\begin{equation}
    \label{eq:se3net_constraint}
    \kappa^{jl}(rx) = D^j(r) \kappa^{jl}(x) D^l(r)^{-1}.
\end{equation}
To bring this constraint into a more manageable form, we vectorize these kernel blocks to $\operatorname{vec}(\kappa^{jl}(x))$, so that we can rewrite the constraint as a matrix-vector equation\footnote{vectorize correspond to flatten it in \texttt{numpy} and the tensor product correspond to \texttt{np.kron}}
\begin{equation}
    \operatorname{vec}(\kappa^{jl}(rx)) = [D^j \otimes D^l](r) \operatorname{vec}(\kappa^{jl}(x)),
\end{equation}
where we used the orthogonality of $D^l$.
The tensor product of representations is itself a representation, and hence can be decomposed into irreducible representations.
For irreducible $\SO3$ representations $D^j$ and $D^l$ of order $j$ and $l$ it is well known \cite{Gurarie1992-SYMMLAP} that $D^j \otimes D^l$ can be decomposed in terms of $2 \min(j, l) + 1$ irreducible representations of order\footnote{There is a fascinating analogy with the quantum states of a two particle system for which the angular momentum states decompose in a similar fashion.} $|j-l| \leq J \leq j + l$.
That is, we can find a change of basis matrix\footnote{$Q$ can be expressed in terms of Clebsch-Gordan coefficients, but here we only need to know it exists.} $Q$ of shape $(2l+1)(2j+1) \times (2l+1)(2j+1)$ such that the representation becomes block diagonal:
\begin{equation}
    \label{eq:tensor_decomp}
    [D^j \otimes D^l](r) = Q^T \left[\bigoplus\nolimits_{J=|j-l|}^{j+l} D^J(r) \right] Q
\end{equation}
Thus, we can change the basis to $\eta^{jl}(x) := Q \vec(\kappa^{jl}(x))$ such that constraint \ref{eq:se3net_constraint} becomes
\begin{equation}
  \eta^{jl}(rx) = \left[\bigoplus\nolimits_{J=|j-l|}^{j+l} D^J(r) \right] \eta^{jl}(x).
\end{equation}
The block diagonal form of the representation in this basis reveals that $\eta^{jl}$ decomposes into $2\min(j,l)+1$ invariant subspaces of dimension $2J+1$ with separated constraints:
\begin{equation}
    \eta^{jl}(x)=\bigoplus\nolimits_{J=|j-l|}^{j+l}\eta^{jl,J}(x) \ ,\qquad \eta^{jl,J}(rx) = D^J(r)\eta^{jl,J}(x)
\end{equation}
This is a famous equation for which the \textit{unique} and \textit{complete} solution is well-known to be given by the spherical harmonics $Y^J(x) = (Y^J_{-J}(x), \ldots, Y^J_J(x))\in\R^{2J+1}$.
More specifically, since $x$ lives in $\R^3$ instead of the sphere, the constraint only restricts the angular part of $\eta^{jl}$ but leaves its radial part free.
Therefore, the solutions are given by spherical harmonics modulated by an arbitrary continuous radial function $\varphi:\R^+\to\R$ as $\eta^{jl,J}(x) = \varphi(\| x\|) Y^J({x} / {\|x\|})$.

To obtain a complete basis, we can choose a set of radial basis functions $\varphi^m : \R_+ \rightarrow \R$, and define kernel basis functions $\eta^{jl,Jm}(x) = \varphi^m(\|x\|) \,  Y^J({x}/{\|x\|})$.
Following \cite{Weiler2018-STEERABLE}, we choose a Gaussian radial shell $\varphi^m(\|x\|) = \exp{(-\frac{1}{2} (\|x\| - m)^2 / \sigma^2)}$ in our implementation.
The angular dependency at a fixed radius of the basis for $j=l=1$ is shown in Figure~\ref{fig:kernel_basis}.

\begin{figure}
    \centering
    \includegraphics[width=0.56\textwidth]{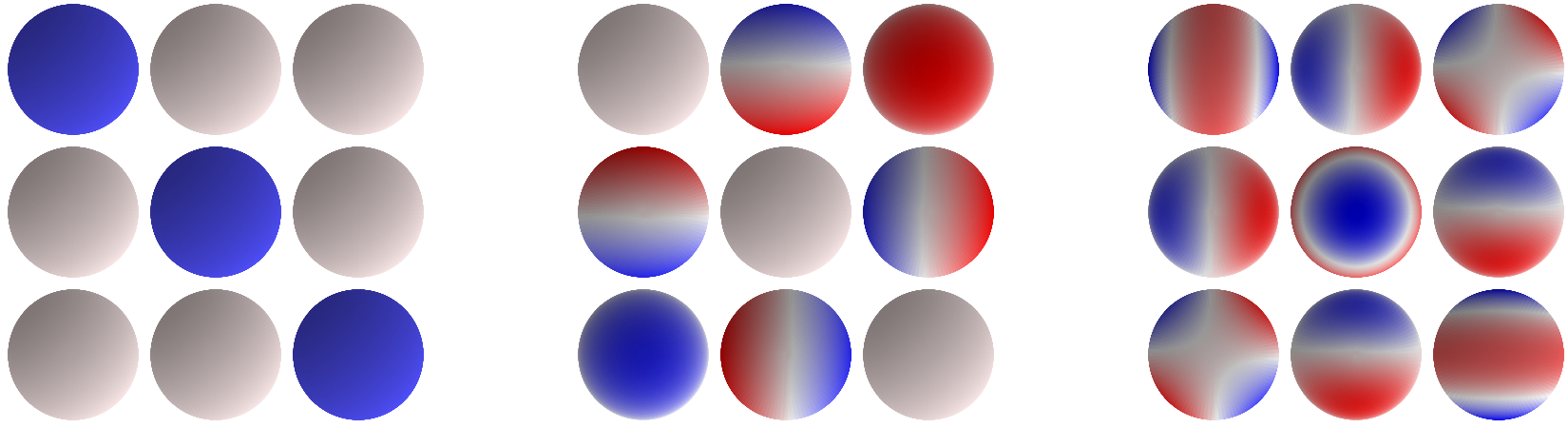}
    \vspace*{-1ex}
    \caption{\small Angular part of the basis for the space of steerable kernels $\kappa^{jl}$ (for $j=l=1$, i.e. 3D \textit{vector} fields as input and output). From left to right we plot three $3 \times 3$ matrices, for $j-l \leq J \leq j+l$ i.e. $J=0, 1, 2$. Each $3 \times 3$ matrix corresponds to one learnable parameter per radial basis function $\varphi^m$. A seasoned eye will see the identity, the curl ($\nabla \wedge$) and the gradient of the divergence ($\nabla \nabla \cdot$).}
    \label{fig:kernel_basis}
\end{figure}

By mapping each $\eta^{jl,Jm}$ back to the original basis via $Q^T$ and unvectorizing, we obtain a basis $\kappa^{jl,Jm}$ for the space of equivariant kernels between features of order $j$ and $l$.
This basis is indexed by the radial index $m$ and frequency index $J$.
In the forward pass, we linearly combine the basis kernels as $\kappa^{jl} = \sum_{Jm} w^{jl,Jm} \kappa^{jl,Jm}$ using learnable weights $w$, and stack them into a complete kernel $\kappa$, which is passed to a standard 3D convolution routine.

\subsection{Equivariant Nonlinearities}
\vspace*{-1ex}
In order for the whole network to be equivariant, every layer, including the nonlinearities, must be equivariant.
In a regular G-CNN, any elementwise nonlinearity will be equivariant because the regular representation acts by permuting the activations.
In a steerable G-CNN however, special equivariant nonlinearities are required.

Trivial irreducible features, corresponding to scalar fields, do not transform under rotations.
So for these features we use conventional nonlinearities like ReLUs or sigmoids.
For higher order features we considered tensor product nonlinearities \cite{Kondor2018-NBN} and norm nonlinearities \cite{Worrall2017-HNET}, but settled on a novel gated nonlinearity.
For each non-scalar irreducible feature $\kappa^i_n \star f_{n-1}(x) = f^{i}_n(x) \in \R^{2 l_{in} + 1}$ in layer $n$, we produce a scalar gate $\sigma(\gamma^i_n \star f_{n-1}(x))$, where $\sigma$ denotes the sigmoid function and $\gamma^i_n$ is another learnable rotation-steerable kernel.
Then, we multiply the feature (a non-scalar field) by the gate (a scalar field): $f^i_n(x) \, \sigma(\gamma^i_n \star f_{n-1}(x))$.
Since $\gamma^i_n \star f_{n-1}$ is a scalar field, $\sigma(\gamma^i_n \star f_{n-1})$ is a scalar field, and multiplying any feature by a scalar is equivariant. See Section~\ref{sec:sm_gate} and Figure~\ref{fig:gate} in the Supplementary Material for details.

\subsection{Discretized Implementation}
\vspace*{-1ex}
In a computer implementation of $\SE3$ equivariant networks, we need to sample both the fields / feature maps and the kernel on a discrete sampling grid in $\Z^3$.
Since this could introduce aliasing artifacts, care is required to make sure that high-frequency filters, corresponding to large values of $J$, are not sampled on a grid of low spatial resolution.
This is particularly important for small radii since near the origin only a small number of pixels is covered per solid angle.
In order to prevent aliasing we hence introduce a radially dependent angular frequency cutoff.
Aliasing effect originating from the radial part of the kernel basis are counteracted by choosing a smooth Gaussian radial profile as described above.
Below we describe how our implementation works in detail.

\subsubsection{Kernel space precomputation}
\vspace*{-1ex}
Before training, we compute basis kernels $\kappa^{jl,Jm}(x_i)$ sampled on a $s \times s \times s$ cubic grid of points $x_i \in \Z^3$, as follows.
For each pair of output and input orders $j$ and $l$ we first sample spherical harmonics $Y^J, |j-l|\leq J\leq j+l$ in a radially independent manner in an array of shape $(2J+1) \times s \times s \times s$.
Then, we transform the spherical harmonics back to the original basis by multiplying by $Q^J\in\R^{(2j+1)(2l+1)\times(2J+1)}$, consisting of $2J + 1$ adjacent columns of $Q$, and unvectorize the resulting array to $\operatorname{unvec}(Q^J Y^J(x_i))$ which has shape $(2j+1) \times (2l+1) \times s \times s \times s$.

The matrix $Q$ itself could be expressed in terms of Clebsch-Gordan coefficients \cite{Gurarie1992-SYMMLAP}, but we find it easier to compute it by numerically solving Eq. \ref{eq:tensor_decomp}.

The radial dependence is introduced by multiplying the cubes with each windowing function $\varphi^m$.
We use integer means $m = 0, \ldots, \lfloor s/2 \rfloor$ and a fixed width of $\sigma = 0.6$ for the radial Gaussian windows.

Sampling high-order spherical harmonics will introduce aliasing effects, particularly near the origin.
Hence, we introduce a radius-dependent bandlimit $J^m_{\text{max}}$, and create basis functions only for $|j-l|\leq J \leq J^m_\text{max}$.
Each basis kernel is scaled to unit norm for effective signal propagation \cite{Weiler2018-STEERABLE}.
In total we get $B=\sum_{m=0}^{\lfloor s/2 \rfloor} \sum_{|j-l|}^{J^m_\text{max}} 1 \leq (\lfloor s/2 \rfloor + 1) (2\min(j,l)+1)$ basis kernels mapping between fields of order $j$ and $l$, and thus a basis array of shape $B \times (2j+1) \times (2l+1) \times s \times s \times s$.

\subsubsection{Spatial dimension reduction}
\vspace*{-1ex}
We found that the performance of the Steerable CNN models depends critically on the way of downsampling the fields.
In particular, the standard procedure of downsampling via strided convolutions performed poorly compared to smoothing features maps before subsampling.
We followed \cite{small_image_translation} and experiment with applying a low pass filtering before performing the downsampling step which can be implemented either via an additional strided convolution with a Gaussian kernel or via an average pooling.
We observed significant improvements of the rotational equivariance by doing so.
See Table~\ref{tab:tetris} in the Supplementary Material for a comparison between performances with and without low pass filtering.

\subsubsection{Forward pass}
\vspace*{-1ex}
At training time, we linearly combine the basis kernels using learned weights, and stack them together into a full filter bank of shape $K_{n+1} \times K_n \times s \times s \times s$, which is used in a standard convolution routine.
Once the network is trained, we can convert the network to a standard 3D CNN by linearly combining the basis kernels with the learned weights, and storing only the resulting filter bank.

\section{Experiments}
\vspace*{-1ex}
We performed several experiments to gauge the performance and data efficiency of our model. 

\subsection{Tetris}
\vspace*{-1ex}
In order to confirm the equivariance of our model, we performed a variant of the Tetris experiments reported by \cite{Thomas2018-TFN}.
We constructed a 4-layer 3D Steerable CNN and trained it to classify 8 kinds of Tetris blocks, stored as voxel grids, in a fixed orientation.
Then we test on Tetris blocks rotated by random rotations in $\SO3$.
As expected, the 3D Steerable CNN generalizes over rotations and achieves $99\pm2\%$ accuracy on the test set.
In contrast, a conventional CNN is not able to generalize over larger unseen rotations and gets a result of only $27\pm7\%$.
For both networks we repeated the experiment over 17 runs.
\begin{wrapfigure}[15]{r}{0.4\textwidth}
    \centering
    \scalebox{0.75}{\import{figures/}{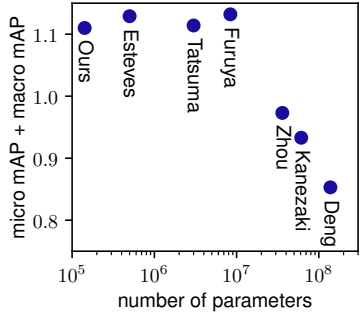}}
    \vspace*{-2ex}
    \caption{\small Shrec17 results\cite{Furuya2016,Esteves2018,Tatsuma2009,s.2018spherical,Bai_2016_CVPR,kanezaki2018_rotationnet,shrec17}. Comparison of different architectures by number of parameters and score. See Table~\ref{tab:shrec17} in the Supplementary Material for all the details.}
    \label{fig:shrec17}
\end{wrapfigure}

\subsection{3D model classification}
\vspace*{-1ex}
Moving beyond the simple Tetris blocks, we next considered
classification of more complex 3D objects. The SHREC17 task
\cite{shrec17}, which contains 51300 models of 3D shapes
belonging to 55 classes (chair, table, light, oven, keyboard,
etc), has a ‘perturbed’ category where images are arbitrarily
rotated, making it a well-suited test case for our model. We converted
the input into voxel grids of size 64x64x64, and used an
architecture similar to the Tetris case, but with an increased number of
layers (see Table~\ref{tab:shrec17_network} in the Supplementary Material). Although we have not done extensive
fine-tuning on this dataset, we find our model to perform
comparably to the current state of the art, see Figure~\ref{fig:shrec17} and Table~\ref{tab:shrec17} in the Supplementary Material.

\subsection{Visualization of the equivariance property}
\vspace*{-1ex}
We made a movie to show the action of rotating the input on the internal fields.
We found that the action are remarkably stable. A visualization is provided in \url{https://youtu.be/ENLJACPHSEA}.

\subsection{Amino acid environments}
\vspace*{-1ex}
Next, we considered the task of predicting amino acid preferences from the atomic environments, a problem which has been studied by several groups in the last year \cite{Torng2017-mx,Boomsma2017-S2CNN}. Since physical forces are primarily a function of distance, one of the previous studies argued for the use of a concentric grid, investigated strategies for conducting convolutions on such grids, and reported substantial gains when using such convolutions over a standard 3D convolution in a regular grid ($0.56$ vs $0.50$ accuracy) \cite{Boomsma2017-S2CNN}.

Since the classification of molecular environments involves the recognition of particular interactions between atoms (e.g. hydrogen bonds), one would expect rotational equivariant convolutions to be more suitable for the extraction of relevant features. We tested this hypothesis by constructing the exact same network as used in the original study, merely replacing the conventional convolutional layers with equivalent 3D steerable convolutional layers.  Since the latter use substantially fewer parameters per channel, we chose to use the same number of \emph{fields} as the number of channels in the original model, which still only corresponds to roughly  half the number of parameters (32.6M vs 61.1M (regular grid), and 75.3M (concentric representation)). Without any alterations to the model and using the same training procedure (apart from adjustment of learning rate and regularization factor), we obtained a test accuracy of $0.58$, substantially outperforming the conventional CNN on this task, and also providing an improvement over the state-of-the-art on this problem.

\subsection{CATH: Protein structure classification}
\vspace*{-1ex}
The molecular environments considered in the task above are oriented based on the protein backbone. Similar to standard images, this implies that the images have a natural orientation. For the final experiment, we wished to investigate the performance of our Steerable 3D convolutions on a problem domain with full rotational invariance, i.e. where the images have no inherent orientation.  For this purpose, we consider the task of classifying the overall shape of protein structures.

We constructed a new data set, based on the CATH protein
structure classification database \cite{dawson2016cath}, version
4.2 (see \url{http://cathdb.info/browse/tree}). The database is a
classification hierarchy containing millions of experimentally
determined protein domains at different levels of structural
detail. For this experiment, we considered the CATH
classification-level of "architecture", which splits proteins
based on how protein secondary structure elements are organized
in three dimensional space. Predicting the architecture from the
raw protein structure thus poses a particularly challenging task
for the model, which is required to not only detect the secondary
structure elements at any orientation in the 3D volume, but also
detect how these secondary structures orient themselves relative
to one another. We limited ourselves to architectures with at
least 500 proteins, which left us with 10 categories. For each of
these, we balanced the data set so that all categories are
represented by the same number of structures (711), also ensuring
that no two proteins within the set have more than 40\% sequence
identity. See Supplementary Material for details. The new dataset is available at
\url{https://github.com/wouterboomsma/cath_datasets}.

We first established a state-of-the-art baseline consisting of a
conventional 3D CNN, by conducting a range of experiments with
various architectures. We converged on a ResNet34-inspired
architecture with half as many channels as the original, and
global pooling at the end. The final model consists of
$15,878,764$ parameters. For details on the experiments done to
obtain the baseline, see Supplementary Material.

Following the same ResNet template, we then constructed a 3D Steerable network by replacing each layer by an equivariant version, keeping the number of 3D channels fixed. The channels are allocated such that there is an equal number of fields of order $l=0, 1, 2, 3$ in each layer except the last, where we only used scalar fields ($l=0$).
This network contains only $143,560$ parameters, more than a factor hundred less than the baseline.

We used the first seven of the ten splits for training, the eighth for validation and the last two for testing. The data set was augmented by randomly rotating the input proteins whenever they were presented to the model during training.
Note that due to their rotational equivariance, 3D Steerable CNNs benefit only marginally from rotational data augmentation compared to the baseline CNN.
We train the models for 100 epochs using the Adam optimizer \cite{Kingma2015-yq}, with an exponential learning rate decay of $0.94$ per epoch starting after an initial burn-in phase of $40$ epochs.

\begin{wrapfigure}{r}{0.55\textwidth}
    \centering
    \scalebox{0.75}{\import{figures/}{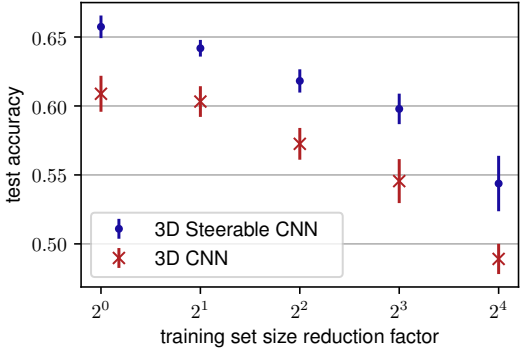}}
    \caption{Accuracy on the CATH test set as a function of increasing reduction in training set size. }
    \label{fig:cath_results}
\end{wrapfigure}

Despite having $100$ times fewer parameters, a comparison between the accuracy on the test set shows a clear benefit to the 3D Steerable CNN on this dataset (Figure~\ref{fig:cath_results}, leftmost value). We proceeded with an investigation of the dependency of this performance on the size of the dataset by considering reductions of the size of each training split in the dataset by increasing powers of two, maintaining the same network architecture but re-optimizing the regularization parameters of the networks.
We found that the proposed model outperforms the baseline even when trained on a fraction of the training set size.
The results further demonstrate the accuracy improvements across these reductions to be robust (Figure~\ref{fig:cath_results}).

\vspace*{-1ex}
\section{Conclusion}
\vspace*{-1ex}
In this paper we have presented 3D Steerable CNNs, a class of $\SE3$-equivariant networks which represents data in terms of various kinds of fields over $\R^3$.
We have presented a comprehensive theory of 3D Steerable CNNs, and have proven that convolutions with $\SO3$-steerable filters provide the most general way of mapping between fields in an equivariant manner, thus establishing $\SE3$-equivariant networks as a universal class of architectures.
3D Steerable CNNs require only a minor adaptation to the code of a 3D CNN, and can be converted to a conventional 3D CNN after training.
Our results show that 3D Steerable CNNs are indeed equivariant, and that they show excellent accuracy and data efficiency in amino acid propensity prediction and protein structure classification.

\newpage
\bibliography{refs}{}
\bibliographystyle{plain}

\clearpage
\setcounter{section}{0}

\begin{center}
    \textbf{\Large Supplementary material\\3D Steerable CNNs: Learning Rotationally Equivariant Features in Volumetric Data}
\end{center}

\section{Design choices}

\subsection{Feature types and multiplicities}
The choice of the types and multiplicities of the features is a hyperparameter of our network comparable to the choice of channels in a conventional CNN.
As in the latter we follow the logic of doubling the number of multiplicities when downsampling the feature maps.
The types and multiplicities of the network's input and output are prescribed by the problem to be solved.
If one uses only scalar fields, then the kernels can only be isotropic, higher order representation allows more complex kernels.
A more detailed investigation of the choice of these hyperparameters is left open for future work.

\subsection{Normalization}
We implemented an equivariant version of batch normalization \cite{pmlr-v37-ioffe15}.
For scalar fields, our implementation matches with the usual batch normalization.
For the nonscalar fields we normalize them with the average of their norms:
\begin{equation}
    f_i(x) \mapsto f_{i}(x) \left( \frac{1}{|\mathcal B|} \sum_{j \in \mathcal B} \frac{1}{V} \int dx |\!|f_{j}(x)|\!|^2 + \epsilon \right)^{-{1/2}}
\end{equation}
where $\mathcal B$ is the batch and $i, j$ are the batch indices.

In order to reduce the memory consumption, we merged the batch normalization operation with the convolution $$\kappa \star \underbrace{(A f + B)}_{BN} = (A \kappa) \star f + \kappa \star B.$$

\subsection{Nonlinearities}
The nonlinearities of an equivariant network need to be adapted to be equivariant themselves.
Note that the domain and codomain of the nonlinearities might transform under different representations.
We give an overview over the nonlinearities with which we experimented in the following paragraphs.

\paragraph{Elementwise nonlinearities}
Scalar features do not transform under rotations.
As a consequence, they can be acted on by elementwise nonlinearities as in conventional CNNs.
We chose $\operatorname{ReLU}$ nonlinearities for all scalar features except those which are used as gates (see below).
\begin{center}
  \scalebox{1.}{\begin{tikzpicture}
    \node (X) {$\mathcal{F}_n^\text{scalar}$};
    \node[right=5cm of X] (fX) {$\mathcal{F}_n^\text{scalar}$}; 
    \node[below=1.5cm of X] (pX) {$\mathcal{F}_{n+1}^\text{scalar}$};
    \node[below=1.5cm of fX] (pfX) {$\mathcal{F}_{n+1}^\text{scalar}$};
    \path[->] (X) edge node[left]{{$\operatorname{ReLU}$}} (pX);
    \path[->] (X) edge node[above]{{$\operatorname{\ind_{\SO3}^{\SE3}[id](g)}$}} (fX);
    \path[->] (pX) edge node[below]{{$\operatorname{\ind_{\SO3}^{\SE3}[id](g)}$}} (pfX);
    \path[->] (fX) edge node[right]{{$\operatorname{ReLU}$}} (pfX);
    \end{tikzpicture}}
\end{center}

\paragraph{Norm nonlinearity}
The representations we are considering are all orthogonal and hence preserve the norm of feature vectors:
\[
    \lVert\rho(r)f(x)\rVert = f^T(x)\rho^T(r)\rho(r)f(x) = f^T(x)f(x) = \lVert f(x)\rVert \quad \forall r\in\SO3,\ f\in\mathcal{F}
\]
It follows that any nonlinearity applied to the norm of the feature commutes with the group transformation.
Denoting a positive bias by $\beta\in\R_+$, we experimented with norm nonlinearites of the form
\[
    f(x)\ \mapsto\ \sigma_\text{norm}(f)(x)\ :=\ \operatorname{ReLU}\left(\lVert f(x)\rVert-\beta\right)\frac{f(x)}{\lVert f(x)\rVert}.
\]
Intuitively, the bias acts as a threshold on the norm of the feature vectors, setting small vectors to zero and preserving the orientation of large feature vectors.
In practice, this kind of nonlinearity tended to converge slower than the gated nonlinearities, therefore we did not use them in our final experiments.
This issue might be related to the problem of finding a suitable initialization of the learned biases for which we could not derive a proper scale.
Norm nonlinearities were considered before in \cite{Worrall2017-HNET}.
\begin{center}
  \scalebox{1.}{\begin{tikzpicture}
    \node (X) {$\mathcal{F}_n$};
    \node[right=5cm of X] (fX) {$\mathcal{F}_n$};
    \node[below=1.5cm of X] (pX) {$\mathcal{F}_{n+1}$};
    \node[below=1.5cm of fX] (pfX) {$\mathcal{F}_{n+1}$};
    \path[->] (X) edge node[left]{{$\sigma_\text{norm}$}} (pX);
    \path[->] (X) edge node[above]{{$\ind_{\SO3}^{\SE3}[\rho](g)$}} (fX);
    \path[->] (pX) edge node[below]{{$\ind_{\SO3}^{\SE3}[\rho](g)$}} (pfX);
    \path[->] (fX) edge node[right]{{$\sigma_\text{norm}$}} (pfX);
    \end{tikzpicture}}
\end{center}

\paragraph{Tensor product nonlinearity}
The tensor product of two fields $f^1$ and $f^2$ is in index notation defined by
\[
    [f^1 \otimes f^2]_{\mu\nu}(x) = f^1_\mu(x) f^2_\nu(x).
\]
This operation is nonlinear and equivariant and hence can be used in neural networks.
We denote this nonlinearity by
\[
    \sigma_\otimes:\ \mathcal{F}_n\oplus\mathcal{F}_n\ \to\ \mathcal{F}_{n+1}:=\mathcal{F}_n\otimes\mathcal{F}_n.
\]
Note that the output of this operation transforms under the tensor product representation $\rho\otimes\rho$ of the input representations $\rho$.
In our framework we could perform a change of basis $Q$ defined by $Q[\rho\otimes\rho]Q^{-1}=\bigoplus_j D^j$ to obtain features transforming under irreducible representations.
\begin{center}
  \scalebox{1.}{\begin{tikzpicture}
    \node (X) {$\mathcal{F}_n\oplus\mathcal{F}_n$};
    \node[right=5cm of X] (fX) {$\mathcal{F}_n\oplus\mathcal{F}_n$}; 
    \node[below=1.5cm of X] (pX) {$\mathcal{F}_{n+1}=\mathcal{F}_n\otimes\mathcal{F}_n$};
    \node[below=1.5cm of fX] (pfX) {$\mathcal{F}_{n+1}=\mathcal{F}_n\otimes\mathcal{F}_n$};
    \path[->] (X) edge node[left]{{$\sigma_\otimes$}} (pX);
    \path[->] (X) edge node[above]{{$\ind_{\SO3}^{\SE3}[\rho\oplus\rho](g)$}} (fX);
    \path[->] (pX) edge node[below]{{$\ind_{\SO3}^{\SE3}[\rho\otimes\rho](g)$}} (pfX);
    \path[->] (fX) edge node[right]{{$\sigma_\otimes$}} (pfX);
    \end{tikzpicture}}
\end{center}

\paragraph{Gated nonlinearity} \label{sec:sm_gate}
The gated nonlinearity acts on any feature vector by scaling it with a data dependent gate.
We compute the gating scalars for each output feature via a sigmoid nonlinearity $\sigma:\mathcal{F}_n^\text{scalar}\to\mathcal{F}_n^\text{scalar}$ acting on an associated scalar feature.
Figure~\ref{fig:gate} shows how the gated nonlinaritiy is coupled with the convolution operation.
One can view the gated nonlinearity as a special case of the norm nonlinearity since it operates by changing the length of the feature vector.
Simultaneously it can also be seen as a tensor product nonlinearity where one of the two fields as a scalar field.
We found that the gated nonlinearities work in practice better than the the other options described above.
\begin{center}
  \scalebox{1.}{\begin{tikzpicture}
    \node (X) {$\mathcal{F}_n^\text{scalar}\oplus\mathcal{F}_n$};
    \node[right=5cm of X] (fX) {$\mathcal{F}_n^\text{scalar}\oplus\mathcal{F}_n$}; 
    \node[below=1.5cm of X] (pX) {$\mathcal{F}_{n+1}$};
    \node[below=1.5cm of fX] (pfX) {$\mathcal{F}_{n+1}$};
    \path[->] (X) edge node[left]{{$\sigma_\text{gate}$}} (pX);
    \path[->] (X) edge node[above]{{$\ind_{\SO3}^{\SE3}[\operatorname{id}\oplus\rho](g)$}} (fX);
    \path[->] (pX) edge node[below]{{$\ind_{\SO3}^{\SE3}[\rho](g)$}} (pfX);
    \path[->] (fX) edge node[right]{{$\sigma_\text{gate}$}} (pfX);
    \end{tikzpicture}}
\end{center}

\begin{figure}
    \centering
    \includegraphics[width=0.7\textwidth]{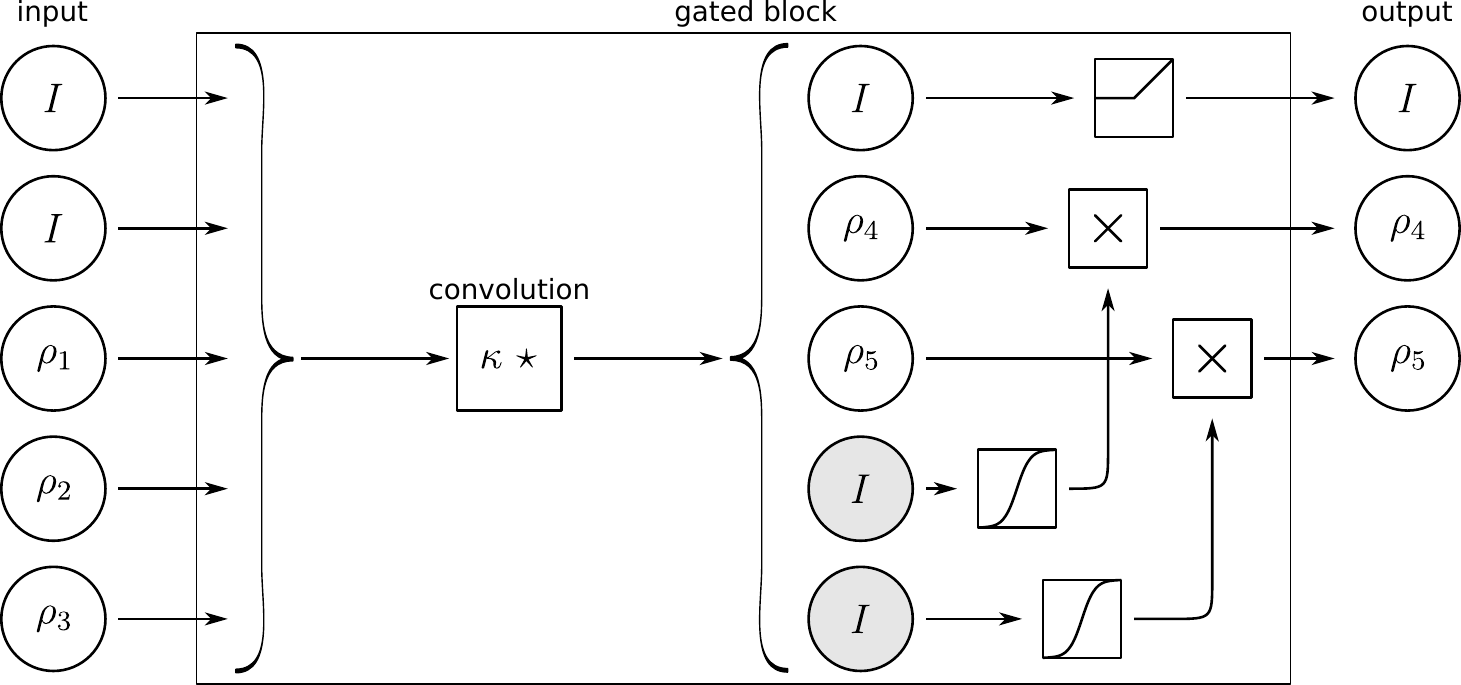}
    \caption{A gated nonlinearity requires one extra scalar field (represented by gray circles with an $I$) per nonscalar output fields (represented by circles with a $\rho$). Specifically, the number of scalar output channels for the preceding convolution operator is increased by the number of features acted on by gated nonlinearities, and the extra scalar fields are computed in the same way as any other scalar field. We use sigmoid for the gate fields. In this picture, there is one scalar field in the output. It is activated with a ReLU.}
    \label{fig:gate}
\end{figure}

\section{Reduced parameter cost of 3D Steerable CNNs}

\begin{wrapfigure}[12]{r}{0.5\textwidth}
    \includegraphics[width=\linewidth]{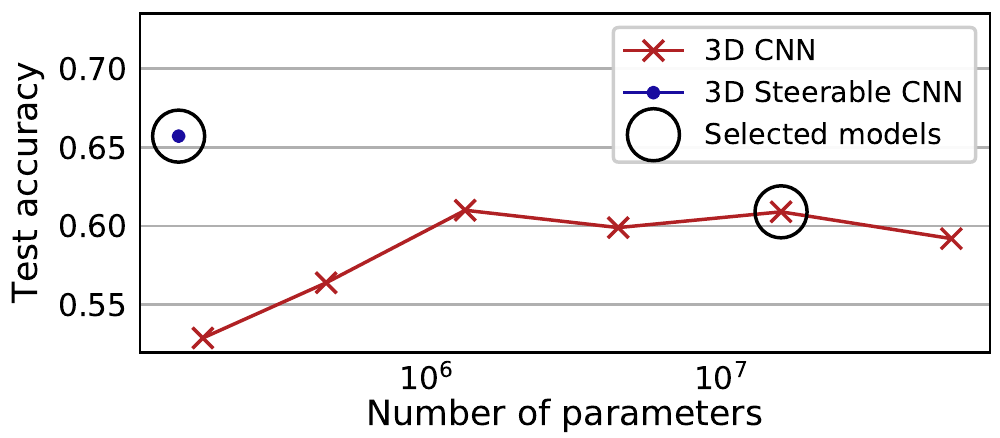}
    \caption{Performance of our 3D Steerable CNN compared to a conventional 3D CNN with varying numbers of filters.}
    \label{fig:acc_vs_parameters}
\end{wrapfigure}
In the main paper, we demonstrated that the 3D Steerable CNN
outperforms a conventional CNN despite having many fewer
parameters. To ensure that the reduced number of parameters would
not be an advantage also for the conventional CNN (due to
overfitting with the high-capacity network), we trained a series
of conventional CNNs with reduced number of filters in each
layer (Figure~\ref{fig:acc_vs_parameters}). Note that the relative performance gain of
our model increases dramatically if we restrict the conventional
CNN to use the same number of parameters as the Steerable CNN. 

\section{The Tetris experiment}
The architecture used for the Tetris experiment has 4 hidden layers, the kernel size is 5 and the padding is 4. We didn't use batch normalization. Table~\ref{tab:tetris_network} shows the multiplicities of the fields representations and the sizes of the fields.
We compare with a regular CNN that has the same feature map sizes. The CNN is like the SE3 network simply without the constraint of being equivariant for rotation. It has therefore much more parameters since its kernels are unconstrained. The SE3 network has 41k parameters and the CNN has 6M parameters.
\begin{table}[ht]
    \centering
    \begin{tabular}{c|ccccll}
        \hline
         & $l=0$ & $l=1$ & $l=2$ & $l=3$ & size & CNN features \\
        \hline
        input & 1 & & & & $36^3$ & 1 \\
        layer 1 & 4 & 4 & 4 & 1 & $40^3$ & 43 \\
        layer 2 & 16 & 16 & 16 & & $22^3$ & 144 \\
        layer 3 & 32 & 16 & 16 & & $13^3$ & 160 \\
        layer 4 & 128 & & & & $17^3$ & 128 \\
        output & 8 & & & & $1$ & 8 \\ 
        \hline
    \end{tabular}
    \caption{Architecture of the network for the Tetris experiment. Between layer 1-2 and 2-3 there is a stride of 2. Between layer 4 and the output there is a global average pooling.}
    \label{tab:tetris_network}
\end{table}
\begin{table}[ht]
    \centering
    \begin{tabular}{l|cc}
        \hline
        low pass filter & disabled & enabled \\
        \hline
        CNN & $24\% \pm 4\%$ & $27\% \pm 7\%$ \\
        SE3 & $36\% \pm 6\%$ & $99\% \pm 2\%$ \\
        \hline
    \end{tabular}
    \caption{Test accuracy to classify rotated pieces of Tetris. Average and standard deviation over 17 runs.}
    \label{tab:tetris}
\end{table}

\section{3D Model classification}

To find the model we ran 10 different models by changing depth, multiplicities, dropout, low pass filter or stride and two initialization method.

For this experiment we used a kernel size of 5 and a padding of 4. We used batch normalization.
In this architecture we did't used the low pass filters.
Table~\ref{tab:shrec17_network} shows the multiplicities of the fields representations and the sizes of the fields.
This network has 142k parameters.

We converted the 3d models into voxels of size $64\times64\times64$ with the following code \url{https://github.com/mariogeiger/obj2voxel}.

Table~\ref{tab:shrec17} compares our results with results of the original competition and two other articles \cite{Esteves2018,s.2018spherical}.

\begin{table}[ht]
    \centering
    \begin{tabular}{c|ccccl}
        \hline
         & $l=0$ & $l=1$ & $l=2$ & size \\
        \hline
        input & 1 & & & $64^3$ \\
        layer 1 & 8 & 4 & 2 & $34^3$ \\
        layer 2 & 8 & 4 & 2 & $38^3$ \\
        layer 3 & 16 & 8 & 4 & $21^3$ \\
        layer 4 & 16 & 8 & 4 & $25^3$ \\
        layer 5 & 32 & 16 & 8 & $15^3$ \\
        layer 6 & 32 & 16 & 8 & $19^3$ \\
        layer 7 & 32 & 16 & 8 & $12^3$ \\
        layer 8 & 512 & & & $16^3$ \\
        output & 55 & & & $1$ \\
        \hline
    \end{tabular}
    \caption{Architecture of the network for the 3D Model experiment. Where the size decrease we used a stride of 2. Between the last hidden layer and the output there is a global average pooling.}
    \label{tab:shrec17_network}
\end{table}

\begin{table}[ht]
    \centering
    \begin{tabular}{l|ccc|ccc|ccc}
        \hline
         & & micro & & & macro & & total & & \\
         & P@R & R@N & mAP & P@R & R@N & mAP & score & input size & params \\
        \hline
        Furuya \cite{Furuya2016} & \textbf{0.814} & 0.683 & 0.656 & \textbf{0.607} & 0.539 & \textbf{0.476} & \textbf{1.13} & $126 \times 10^3$ & 8.4M \\
        Esteves \cite{Esteves2018} & 0.717 & 0.737 & 0.685 & 0.450 & 0.550 & 0.444 & \textbf{1.13} & $\boldsymbol{2 \times 64^2}$ & 0.5M \\
        Tatsuma \cite{Tatsuma2009} & 0.705 & \textbf{0.769} & \textbf{0.696} & 0.424 & \textbf{0.563} & 0.418 & 1.11 & $38 \times 224^2$ & 3M \\
        Ours & 0.704 & 0.706 & 0.661 & 0.490 & 0.549 & 0.449 & 1.11 & $1 \times 64^3$ & \textbf{142k} \\
        Cohen \cite{s.2018spherical} & 0.701 & 0.711 & 0.676 & - & - & - & - & $6 \times 128^2$ & 1.4M \\
        Zhou \cite{Bai_2016_CVPR} & 0.660 & 0.650 & 0.567 & 0.443 & 0.508 & 0.406 & 0.97 & $50 \times 224^2$ & 36M \\
        Kanezaki \cite{kanezaki2018_rotationnet} & 0.655 & 0.652 & 0.606 & 0.372 & 0.393 & 0.327 & 0.93 & - & 61M \\
        Deng \cite{shrec17} & 0.418 & 0.717 & 0.540 & 0.122 & 0.667 & 0.339 & 0.85 & - & 138M \\
        \hline
    \end{tabular}
    \caption{Results of the SHREC17 experiment.}
    \label{tab:shrec17}
\end{table}

\section{The CATH experiment}

\subsection{The data set}
The protein structures used in the CATH study were simplified to include only $C_{\alpha}$ atoms (one atom per amino acid in the backbone), and placed at the center of a $50^3$vx grid, where each voxel spans $0.2$ nm.
The values of the voxels were set to the densities arising from placing a Gaussian at each atom position, with a standard deviation of half the voxel width. Since we limit ourselves to grids of size 5 nm, we exclude proteins which expand beyond a 5 nm sphere centered around their center of mass.
This constraint is only violated by a small fraction of the original dataset, and thus constitutes no severe restriction.

For training purposes, we constructed a 10-fold split of the data.
To rule out any overlap between the splits (in addition to the 40\% homology reduction), we further introduce a constraint that any two members from different splits are guaranteed to originate from different categories at the "superfamily" level in the CATH hierarchy (the lowest level in the hierarchy), and all splits are guaranteed to have members from all 10 architectures.
Further details about the data set are provided on the website (\url{https://github.com/wouterboomsma/cath_datasets}).

\subsection{Establishing a state-of-the-art baseline}

The baseline 3D CNN architecture for the CATH task was determined through a range of experiments, ultimately converging on a ResNet34-like architecture, with half the number of channels compared to the original implementation (but with an extra spatial dimension), and using a global pooling at the end to obtain translational invariance. After establishing the architecture, we conducted additional experiments to establish good values for the learning and drop-out rates (both in the linear and in the convolutional layers). We settled on a $0.01$ dropout rate in the convolutional layers, and L1 and L2 regularization values of $10^{-7}$. The final model consists of $15,878,764$ parameters.

\subsection{Architecture details}
Following the same ResNet template, we then constructed a 3D Steerable network, by replacing each layer with its equivariant equivalent. In contrast to the model architecture for the amino acid environment, we here opted for a minimal architecture, where we use exactly the same number of \emph{3D channels} as in the baseline model, which leads to a model with the following block structure: $(2, 2, 2, 2), (((2, 2, 2, 2) \times 2) \times 3), (((4, 4, 4, 4) \times 2) \times 4), (((8, 8, 8, 8) \times 2) \times 6), (((16, 16, 16, 16) \times 2) \times 2 + ((256, 0, 0, 0))$. Here the 4-tuples represent fields of order $l=0, 1, 2, 3$, respectively. The final block deviates slightly from the rest, since we wish to reduce to a scalar representation prior to the pooling. Optimal regularization settings were found to be a capsule-wide convolutional dropout rate of $0.1$, and L1 and L2 regularization values of $10^{-8.5}$. In this minimal setup, the model contains only $143,560$ parameters, more than a factor hundred less than the baseline.

\end{document}